\documentclass{article}

\PassOptionsToPackage{numbers,compress}{natbib}
\usepackage[preprint]{neurips_2026}
\workshoptitle{Transitioning from Pre-Training to Post-Training}

\usepackage[utf8]{inputenc}
\usepackage[T1]{fontenc}
\usepackage{hyperref}
\usepackage{url}
\usepackage{booktabs}
\usepackage{multirow}
\usepackage{fontawesome5}
\usepackage{amsmath,amssymb,amsfonts,mathtools}
\usepackage{nicefrac}
\usepackage{microtype}
\usepackage{xcolor}
\usepackage{graphicx}
\usepackage{xspace}
\usepackage{amsmath}
\usepackage{bm}
\usepackage{amsthm} 
\newtheorem{lemma}{Lemma} 
\usepackage{wrapfig}
\usepackage{caption}
\usepackage{subcaption}

\newcommand{\ours}{3vKCCA\xspace}

\title{{Revisiting  Visual Representation Enhancement of VLMs via Kernel Canonical Correlation Analysis}}

\author{
  Peilin Yang\thanks{These authors contributed equally.}
  \quad
  Xiaoyu Liu\footnotemark[1]
  \quad
  Jian Sun
  \quad
  Qinghua Tao\thanks{Corresponding author. Email: qinghua.tao@bit.edu.cn}
  \\
  Beijing Institute of Technology
}

\begin{document}
\maketitle

\begin{abstract}
Vision-language models such as CLIP exhibit strong semantic generalization, but remain limited in fine-grained visual perception. A recent work named KUEA presents a natural remedy by finetuning the image encoder under the supervision of the vision-centric DINOv2 to align their kernel matrices element-wisely, while regularizing the embeddings to remain close to the pretrained  visual encoder for preserving image-text semantics in CLIP. However, we show that diminishing the role of the alignment loss to DINOv2 does not necessarily degrade its fine-grained visual performance, suggesting that the kernel-matrix discrepancy may  be insufficient for further visual representation enhancement,  motivating us to revisit the alignment formulation.  In this work, we  present a novel perspective to characterize  representation alignment on feature subspaces through Kernel Canonical Correlation Analysis (KCCA), which maximizes the projection correlations. In optimization, we derive an efficient end-to-end training scheme upon  KKT conditions, avoiding the eigenvalue problem in KCCA. Further, we extend our method into a 3-view formulation, i.e., \ours, in which the projections from the pretrained text encoder are also incorporated under a unified optimization framework for joint alignment. With CLIP ViT-L/14 on ImageNet-1K, our \ours  improves the MMVP-VLM accuracy from $17.8$ to $25.9$, substantially outperforming the existing methods, and meanwhile maintains zero-shot image--text retrieval performance.

\end{abstract}

\section{Introduction}
Vision–Language Models (VLMs) have been attracting tremendous attention in recent years thanks to their strong semantic generalization in various downstream tasks ~\cite{li2022languagedriven, minderer_eccv_2022,Saha_2024_CVPR}. Notably, the Contrastive Language-Image Pretraining (CLIP) model~\cite{radford2021clip} demonstrates strong zero-shot recognition capabilities by aligning visual representations with textual descriptions. However,  such contrastive pretraining
does not necessarily encourage the visual encoder to capture finer pixel-level details, such as spatial relations, even when these details are readily distinguishable to human observers~\cite{tong2024eyes}.

Vision-only self-supervised models such as DINOv2~\cite{oquab2024dinov2} focus on visual representations that capture fine  details with strong visual grounding, making them suitable supervisors for refining the visual encoder in CLIP. {A key challenge naturally arises: \emph{how to enhance the visual representation without downgrading the  image--text alignment of the pretrained VLM model.}  A recent work, KUEA~\cite{gong2025kuea}, proposes to balance these two goals with a weighted combination objective.  KUEA minimizes an $\ell_2$-norm  loss for visual alignment between two kernel matrices computed  from embeddings of the finetuned visual encoder and  DINOv2, and also regularizes an $\ell_2$-norm term to confine the finetuned visual embeddings close to the pretrained counterparts in CLIP for image-text pairing semantics.}


{We revisit the kernel-alignment loss in KUEA by examining a premise underlying this formulation: \emph{whether and how the fine-grained visual capability benefits from the minimization of such alignment loss}. We examine this premise on  MMVP-VLM~\cite{tong2024eyes}, a benchmark for assessing fine-grained visual abilities.  In our reproduction, removing the kernel-alignment loss from KUEA yields similar MMVP-VLM accuracy as the full objective; when further minimizing the alignment loss and meanwhile downweighting the regularization term,  the fine-grained visual performance does not improve either}. 
{This observation indicates that \emph{(i)} the continual training  with the regularization term may also contribute to enhancing visual capabilities, and \emph{(ii)} the optimization upon this kernel-alignment loss with DINOv2 can be insufficient  for further enhancing fine-grained visual capabilities. We notice that a concurrent work, namely KALE~\cite{pawlowicz2026kale}, partially addresses the above issues through a loss-equilibration controller that  rescales the kernel-alignment loss towards a desired ratio over the regularization term for pronouncing the alignment to DINOv2. We go beyond the loss of straightforwardly approaching the kernel matrix from DINOv2, and also reconsider the maintenance of image-text  semantics in the pretrained CLIP from the role of the text encoder under a unified framework.}
{In this work, we present a novel perspective that characterizes the representation alignment on feature subspaces through  Kernel Canonical Correlation Analysis (KCCA) \cite{bach2002kernel,suykens2002lssvm}. Under our framework,  the alignment is formulated by the maximal correlations between the representation projections of the finetuned visual encoder and DINOv2. KCCA has long been considered a fundamental tool to identify projection directions, along which cross-view variations are maximally associated. The couplings in KCCA are celebrated with explicit optimization formulas and theoretical guarantees \cite{bach2002kernel,hotelling1936relations},  in contrast to the existing kernel-alignment loss of straightforwardly minimizing element-wise distances. Furthermore, we resort to  the projections of the pretrained text encoder from  CLIP and  incorporate them into the alignment under our KCCA-based framework with a 3-view  extension, namely \ours. Then, a joint optimization problem is established, which finetunes the  image encoder of CLIP to align  not only with the supervision of DINOv2 for enhancing fine-grained visual  capabilities, but also  with the frozen  text encoder from CLIP for preserving image--text semantics.}

Experiments on OpenAI CLIP ViT-L/14 show that our \ours distinctively improves fine-grained visual capabilities on MMVP-VLM from 17.8 to 25.9, while preserving strong zero-shot image--text retrievals of the pretrained CLIP. {We then  further investigate  our perspective upon  subspace projection correlations for representation alignment. Firstly,  ablation studies show that our performance gain diminishes when  the KCCA-alignment loss to DINOv2 is removed, when the correspondence to DINOv2 is shuffled, or when  DINOv2 is replaced by a random teacher model for supervision; these results verify 
that the fine-grained visual capabilities indeed benefit from our alignment formulation. Secondly, our 3-view KCCA extension leveraging the frozen text encoder consistently outperforms the 2-view alternative; this reveals that upon aligning to  strong visual supervision, visual representations are promising to be enhanced via cross-view couplings  complemented by text semantics.}

\section{Related Work}
\label{sec:related_work}

\paragraph{Enhancing Visual Representations of CLIP.}
{Prior work considers the enhancement of fine-grained visual representations of VLMs mainly by localized supervision or vision-centric teacher models.}
RegionCLIP learns from region--text pairs~\cite{zhong2022regionclip}, whereas FG-CLIP combines detailed captions, region annotations, and hard negatives~\cite{xie2025fgclip}. CLIP-DINOiser transfers DINO-derived localization priors to CLIP~\cite{wysoczanska2024clipdinoiser}, and Talk2DINO maps CLIP text embeddings to DINOv2 patch features~\cite{barsellotti2025talk2dino}. {More closely related to image-only finetuning as addressed in this work, DIVA uses diffusion feedback~\cite{wang2025diva}, while KUEA matches the elements of kernel matrices from CLIP and DINOv2~\cite{gong2025kuea}. Upon KUEA, KALE adaptively balances the reweighted loss terms for improved fine-grained visual capabilities~\cite{pawlowicz2026kale}.}

\paragraph{Kernel CCA and its Multi-View Extensions.}
Canonical correlation analysis (CCA) \cite{hotelling1936relations} learns linear projections that maximize correlation between a pair of views, i.e., two sources of data.  Kernel CCA (KCCA) extends this principle to nonlinear relations through kernel tricks~\cite{bach2002kernel,suykens2002lssvm,vangelstel2001kcca}. Deep CCA \cite{andrew2013dcca} parameterizes the nonlinear  transformations with neural networks 
and  \cite{chapman2024stochasticcca} further derives generic deep CCA formulations together with their multi-view extensions. 
{Some recent work considers connecting  CCA to the training objectives of Barlow Twins~\cite{zbontar2021barlow} and VICReg~\cite{bardes2022vicreg} in large language models.
Our work  leverages  Least Squares Support Vector Machines (LSSVM) \cite{suykens2002lssvm} to formulate a KCCA-based objective over 3 functionally distinct views, where  the stationary solutions with their  Karush–Kuhn–Tucker (KKT) conditions ensure a zero-value objective in an unconstrained optimization form, making it possible to be trained in an efficient end-to-end manner.}

\section{Method}
\label{sec:method}


{In this section, we  innovate to characterize
the representation alignment through their projection correlations in feature subspaces under the KCCA framework to improve visual capabilities in CLIP. This perspective differs from the {mainstream approach} of minimizing sample-wise distances, e.g., the difference between the element-wise discrepancy of two kernels in KUEA or even the contrastive loss in pretraining CLIP. Our work resorts to the alignment through the correlations in projected subspaces, in which  maximal information couplings are guaranteed by the well-established KCCA method. An efficient end-to-end training scheme under KKT conditions  is then proposed.}

\paragraph{Notation.}  {Consider a pretrained CLIP consisting of  an image encoder $f_{\rm{img}}(\cdot)$ with  parameters $\bm \theta_0$ and a text encoder $f_{\rm{txt}}(\cdot)$, and   DINOv2  $f_{\rm{dino}}(\cdot)$. Let $\hat{f}_{\rm{img}}(\cdot)$ denote the finetuned image encoder of CLIP with  parameter $\bm \theta$. Given image--text pairs $\{(I^{(n)},T^{(n)})\}_{n=1}^{N}$, we use $\mathcal{V}=\{V,D,T\}$ as subscripts to index the  views w.r.t. the  finetuned CLIP visual encoder  $\hat{f}_{\rm{img}}$, DINOv2 $f_{\rm{dino}}$, and the fixed  text  encoder of CLIP  $f_{\rm{txt}}$, respectively, with their corresponding embeddings as $\bm h_v$, $v\in \mathcal{V}$.}


\begin{figure}[t]
    \centering
    \includegraphics[
        width=1\textwidth,
        trim=5 70 45 35,
        clip
    ]{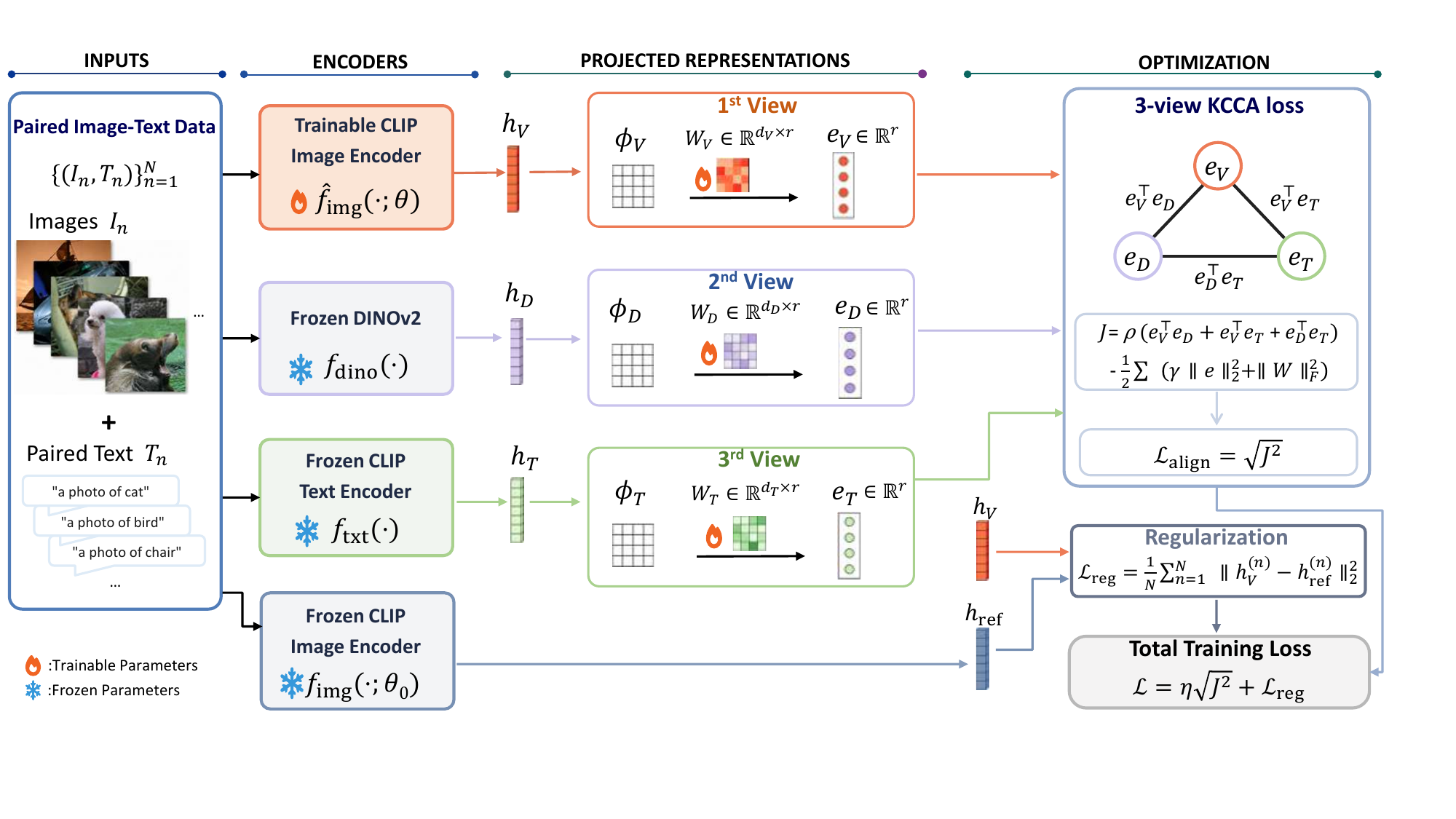}
    \caption{
    \textbf{Overall schematics of \ours.}
    The CLIP vision encoder is jointly aligned with the given DINOv2 and the frozen CLIP text encoder via 3-view KCCA, together with regularization.}
    \label{fig:method_overview}
\end{figure}

\subsection{Aligning Subspace Projections with Kernel CCA}
\label{sec:subspace_cca}

\paragraph{KCCA-Alignment with DINOv2 Supervision.}
We cast the representation alignment between the fine-tuned CLIP image 
encoder and the frozen DINOv2 encoder under the KCCA framework. Denote $\phi_V(\cdot)$ and $\phi_D(\cdot)$ as the  feature mappings w.r.t. CLIP-image and DINOv2 embeddings in KCCA, respectively, where their inner products can be evaluated implicitly through  kernel tricks $\kappa_{V}(\cdot,\cdot)=\langle\phi_V(\cdot), \phi_V(\cdot)\rangle$ and $\kappa_{D}(\cdot,\cdot)=\langle\phi_D(\cdot), \phi_D(\cdot)\rangle$, allowing to capture nonlinear correlations.
For  views $v\in\{V,D\}$, we denote
$W_v=[\bm w_{v,1},\ldots,\bm w_{v,r}]$ as the corresponding projection matrices
 in the $r$-dimensional subspaces of KCCA and  $\bm e_v=W_v^\top\phi_v\in\mathbb{R}^r$ as the resulting projection scores, which are maximally correlated in the optimization problem. In this work, we formulate KCCA under
  LSSVM setups \cite{suykens2002lssvm}, leading to the following  primal optimization:
\begin{equation}\label{eq:cca:lssvm}
\begin{array}{rll}
    \max\limits_{W_V, W_D, \bm e_V, \bm e_D}& -{\rm{Tr}}(W_V^\top  W_V) - {\rm{Tr}}(W_D^\top  W_D) +   \dfrac{1}{\lambda}\sum\limits_{n=1}^N \bm e_V^{(n)}\bm e_D^{(n)} - \dfrac{{\rho_1}}{2}\sum\limits_{n=1}^N {\bm e_V^{(n)}}^2 - \dfrac{{\rho_2}}{2}\sum\limits_{n=1}^N {\bm e_D^{(n)}}^2 
  \\
{\rm{s.t.}}&\bm e_V^{(n)} =  W_V^\top \phi_V^{(n)} , \  n=1, \ldots, N,\\
     & \bm e_D^{(n)} =  W_D^\top  \phi_D^{(n)} , \ n=1, \ldots, N, 
\end{array}
\end{equation}
where $\lambda, \rho_1, \rho_2>0$ are regularization constants.  {With the Lagrangian duality and
the KKT conditions, the dual optimization  of \eqref{eq:cca:lssvm}
leads to  solving a generalized eigenvalue problem: 
\(
\begin{bmatrix}
\begin{smallmatrix}
0_N & K_D \\
K_V & 0_N
\end{smallmatrix}
\end{bmatrix}
\begin{bmatrix}
\begin{smallmatrix}
\bm\alpha_V \\
\bm\alpha_D
\end{smallmatrix}
\end{bmatrix}
=
\lambda
\begin{bmatrix}
\begin{smallmatrix}
\rho_1 K_D + I_N & 0_N \\
0_N & \rho_2 K_V + I_N
\end{smallmatrix}
\end{bmatrix}
\begin{bmatrix}
\begin{smallmatrix}
\bm\alpha_V \\
\bm\alpha_D
\end{smallmatrix}
\end{bmatrix},
\)where $K_V,K_D\in\mathbb{R}^{N\times N}$ are the kernel matrices induced by  embeddings  $\{\bm h_{V}^{(n)}\}_{n=1}^N$ and $\{\bm h_{D}^{(n)}\}_{n=1}^N$ of the finetuned visual encoder and the given DINOv2 model, 
respectively, such that
$[K_V]_{ij}
=\kappa_V(\bm h_V^{(i)},\bm h_V^{(j)})=\langle\phi_V(\bm h_V^{(i)}),\phi_V(\bm h_V^{(j)})\rangle$
and
$[K_D]_{ij}=\kappa_D(\bm h_D^{(i)},\bm h_D^{(j)})
=\langle\phi_D(\bm h_D^{(i)}),\phi_D(\bm h_D^{(j)})\rangle$.
The dual variables $\bm\alpha_V,\bm\alpha_D\in\mathbb{R}^{N}$ are then obtained as the eigenvectors. In KCCA, the largest 
eigenvalues give the optimal correlations and their associated  eigenvectors corresponding to the  pair projection directions\cite{bach2002kernel,suykens2002lssvm}. 
}


\paragraph{{The 3-view KCCA Extension of Joint Alignment with Text Semantics.}} Under the KCCA framework, the two views can be flexibly extended with multiple ones, which are jointly optimized to maximize mutual correlations \cite{benton2019dgcca,chapman2024stochasticcca,suykens2002lssvm}. Our  2-view KCCA optimization to align with DINOv2 $f_{\rm{dino}}$  can be accordingly further developed  to jointly align with the pretrained text encoder $f_{\rm{txt}}$, i.e., the 3rd view, for preserving image-text pairing semantics in the pretrained CLIP. In this way, the enhancement of visual representations and the maintenance of image-text retrievals are simultaneously considered under a unified optimization framework based on KCCA, leading to the following objective:
\begin{align}
J = \frac{1}{N}\sum_{n=1}^{N}
\Bigl(
\rho \Bigl(
{\bm{e}_V^{(n)\top}\bm{e}_D^{(n)}}
+
{\bm{e}_V^{(n)\top}\bm{e}_T^{(n)}}
+
{\bm{e}_D^{(n)\top}\bm{e}_T^{(n)}}
\Bigr)
-
\frac{1}{2}\sum_{v\in\mathcal{V}}
\Bigl(
\gamma_v \|\bm{e}_v^{(n)}\|_2^2
+
{\rm{Tr}}(W_v^\top W_v)
\Bigr)
\Bigr)
\label{eq:three_view_primal}
\end{align}
{with
$\bm e_v^{(n)}=W_v^\top\phi_v^{(n)}$
for  $v\in\mathcal{V}$, where the mutual correlations between these 3 views are maximized. Similarly, the dual problem of \eqref{eq:three_view_primal} also leads to   a generalized eigenvalue problem \cite{suykens2002lssvm}.}

\subsection{An End-to-End Optimization Scheme}
\label{sec:end_to_end_optimization}
{In optimization, the parameters $\bm \theta$ of the CLIP visual encoder $\hat{f}_{\rm{img}}$ are finetuned and the parameters in our \ours are also optimized. KCCA commonly solves a  generalized eigenvalue problem in the dual form. However, in this post-training process,
it is impractical to  repeatedly solve a  generalized eigenvalue problem during each iteration of finetuning the network parameters $\bm \theta$ of the CLIP visual encoder $\hat{f}_{\rm{img}}$. Thus, we resort to the optimization of \ours in the primal. In the following Lemma \ref{lem:kkt_objective}, we prove the zero-value property of the primal optimization objective $J$ in \eqref{eq:three_view_primal}, when it is evaluated at  stationary solutions with the KKT conditions, as proved in Appendix~\ref{app:kkt}.
\begin{lemma}
[A zero-value objective with stationary solutions] 
\label{lem:kkt_objective} 
Given  $\{\phi_v^{(n)}, v\in\mathcal{V}\}_{n=1}^{N}$, the stationary solutions under KKT conditions lead to the zero-value objective $J^\star=0$ in  primal optimization \eqref{eq:three_view_primal}.
\end{lemma}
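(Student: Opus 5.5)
The plan is to treat \eqref{eq:three_view_primal} as a constrained problem in the variables $\{W_v,\bm e_v^{(n)}\}$, with equality constraints $\bm e_v^{(n)}=W_v^\top\phi_v^{(n)}$, and to use the structure of the resulting KKT conditions. The key fact is that $J$ is a homogeneous quadratic form jointly in $(W_v,\bm e_v)$. Euler's identity for homogeneous functions should then force the objective to vanish at any stationary point.

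I will introduce Lagrange multipliers $\bm\alpha_v^{(n)}\in\mathbb{R}^r$ and form
\[
\mathcal{L}=J-\frac{1}{N}\sum_{n}\sum_{v}\bm\alpha_v^{(n)\top}\bigl(\bm e_v^{(n)}-W_v^\top\phi_v^{(n)}\bigr).
\]
Setting the partial derivatives to zero gives two sets of conditions.
\begin{align*}
&W_v=\sum_n \phi_v^{(n)}\bm\alpha_v^{(n)\top},\\
&\bm\alpha_v^{(n)}=\rho\sum_{u\neq v}\bm e_u^{(n)}-\gamma_v\bm e_v^{(n)},
\end{align*}
together with the constraints themselves. The first set comes from the $W_v$-derivative, because $\partial_{W_v}\tfrac12\mathrm{Tr}(W_v^\top W_v)=W_v$. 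The second set comes from the $\bm e_v^{(n)}$-derivative.

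Next I will contract each condition with its own variable. Taking the inner product of the second condition with $\bm e_v^{(n)}$ and summing over $v$ gives
\[
\sum_v \bm\alpha_v^{(n)\top}\bm e_v^{(n)} = 2\rho\bigl(\bm e_V^\top\bm e_D+\bm e_V^\top\bm e_T+\bm e_D^\top\bm e_T\bigr)^{(n)}-\sum_v\gamma_v\|\bm e_v^{(n)}\|^2.
\]
Here each cross term appears twice, and this expression is exactly twice the $\bm e$-part of the summand. For the first condition, taking the trace against $W_v$ and using the constraint gives
\[
\mathrm{Tr}(W_v^\top W_v)=\sum_n\bm\alpha_v^{(n)\top}W_v^\top\phi_v^{(n)}=\sum_n\bm\alpha_v^{(n)\top}\bm e_v^{(n)}.
\]

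Substituting both identities into $J$ yields
\[
J^\star=\frac1N\Bigl(\frac12\sum_{n,v}\bm\alpha_v^{(n)\top}\bm e_v^{(n)}-\frac12\sum_v\mathrm{Tr}(W_v^\top W_v)\Bigr)=0.
\]

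The main obstacle is normalization bookkeeping. In \eqref{eq:three_view_primal} the trace term sits inside $\frac1N\sum_n$, so its effective weight is $\frac12\mathrm{Tr}(W_v^\top W_v)$, while the correlation terms carry no factor $\tfrac12$. I must check that the factor of $2$ from differentiating the cross terms matches the $\tfrac12$ on the quadratic terms, so that the two contractions coincide exactly rather than up to a constant. If the scalings fail to match, I would fall back on the general Euler argument. Since every term of $J$ is degree two, $\langle\nabla J,x\rangle=2J$. On the constraint set, the stationarity conditions give $\langle\nabla J,x\rangle=\sum\bm\alpha^\top(\bm e-W^\top\phi)=0$, because the constraints are linear and homogeneous. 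This gives $J^\star=0$ independently of the constants.
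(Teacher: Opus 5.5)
Your proposal is correct and follows the paper's proof essentially step for step: the same Lagrangian, stationarity in $W_v$ and $\bm e_v^{(n)}$, contraction of each condition with its own variable, and substitution back into $J$; your Euler-homogeneity fallback is the abstract form of that computation and is also valid. There is one bookkeeping slip. Under the weighting in \eqref{eq:three_view_primal}, the $W_v$-condition is $W_v=\frac1N\sum_n\phi_v^{(n)}\bm\alpha_v^{(n)\top}$, which gives $\mathrm{Tr}(W_v^\top W_v)=\frac1N\sum_n\bm\alpha_v^{(n)\top}\bm e_v^{(n)}$, and the trace term in $J^\star$ has coefficient $\frac12$ rather than $\frac1{2N}$. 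Your dropped and extra factors of $N$ cancel, so $J^\star=0$ still holds.
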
 
With the property in Lemma~\ref{lem:kkt_objective}, rather than solving an eigenvalue problem with kernels  in the
dual, we leverage this zero-value property, necessary to the stationary solution under the KKT conditions, to construct a practical end-to-end objective by minimizing the primal residual towards zero, i.e., $\min \sqrt{J^2}$. Similar to KUEA and KALE \cite{gong2025kuea,pawlowicz2026kale},
we keep the regularizer that penalizes an $\ell_2$ distance between the output embeddings of the finetuned visual encoder $\hat{f}_{\rm{img}}$ and the  pretrained  one $f_{\rm{img}}$:
\begin{equation} \label{eq:end_to_end_objective} \min_{\bm \theta,\{W_v\}_{v\in\mathcal{V}}} \mathcal{L} = \eta\sqrt{J^2} + \frac{1}{N} \sum\nolimits_{n=1}^{N} \| \hat f_{\mathrm{img}}(I^{(n)}) - f_{\mathrm{img}}(I^{(n)}) \|_2^2,
\end{equation}  with  regularization coefficient $\eta>0$.  Thus,  the optimization of our method is cast in an efficient end-to-end manner with  \eqref{eq:end_to_end_objective}, where the eigenvalue problem related to KCCA optimization is avoided. Note that since the projection weight matrices $W_v$  in \ours are low-dimensional, the additional computational overhead is  negligible compared to the CLIP  backbones.} 

\section{Experiments}
\label{sec:experiments}

In this section, {we evaluate our \ours on  fine-grained visual capabilities  and the pretrained image-text retrievals, and also investigate which specific components contribute to  performance gains.}

\subsection{Setups}
\label{sec:setups}
We adopt OpenAI CLIP ViT-L/14 and fine-tune its visual encoder on ImageNet-1K~\cite{deng2009imagenet}, while keeping the text encoder frozen.
A DINOv2 model with a ViT-L/14 backbone is used as the vision-centric supervisor.
We evaluate fine-grained visual capabilities on MMVP-VLM~\cite{tong2024eyes}, which contains visually distinct image pairs that CLIP maps to nearby representations across nine visual patterns. 
We also report zero-shot image--text retrieval on Flickr30K~\cite{young2014flickr30k} and MSCOCO~\cite{chen2015coco}. Full implementation details 
are provided in Appendix~\ref{app:implementation_details}.

\subsection{Results and Analysis}
\label{sec:main_results}

\paragraph{Main Results.} 
As shown in Table~\ref{tab:main_results}, our \ours improves the average
MMVP-VLM accuracy from $17.8$ to $25.9$, substantially outperforming DIVA and KUEA
by $2.9$ and $5.2$, respectively. In the meantime, we  improve the aggregate retrieval score from $72.7$ to $73.3$. 
{These results verify  the effectiveness of our \ours for enhancing  fine-grained visual
perception and retaining the image-text semantics in the pretrained CLIP.}
Complete per-dataset results are reported in
Appendix~\ref{app:full_transfer}.

\begin{table}[t]
\centering
   \vspace{-10pt}
\caption{Fine-grained perception and image–text retrieval performances (\%, $\uparrow$).}
  \vspace{-1pt}
\label{tab:main_results}
\small
\setlength{\tabcolsep}{2.8pt}
\renewcommand{\arraystretch}{1.06}
\resizebox{0.7\linewidth}{!}{%
\begin{tabular}{
lcccccccccc
@{\hspace{4pt}\vrule width 0.35pt\hspace{4pt}}
ccc
}
\toprule
& \multicolumn{10}{c}{{MMVP-VLM}}
& \multicolumn{3}{c}{{Image--Text Retrieval}} \\
\cmidrule(lr){2-11}
\cmidrule(lr){12-14}

Method
& \faCompass
& \faSearch
& \faSync
& \faSortNumericUp
& \faMapPin
& \faPalette
& \faCogs
& \faFont
& \faCamera
& Avg.
& I2T
& T2I
& Avg.\\
\midrule

CLIP~\cite{radford2021clip}
& 6.7
& 13.3
& 20.0
& 13.3
& 6.7
& 53.3
& 26.7
& 6.7
& 13.3
& 17.8
& 79.5
& 65.8
& 72.7 \\

DIVA~\cite{wang2025diva}
& \textbf{13.3}
& 20.0
& 40.0
& 13.3
& 6.7
& 53.3
& 33.3
& \textbf{13.3}
& 13.3
& 23.0
& \textbf{79.8}
& 65.8
& 72.8\\

KUEA~\cite{gong2025kuea}
& 0.0
& 20.0
& 26.7
& \textbf{20.0}
& \textbf{13.3}
& 33.3
& 33.3
& \textbf{13.3}
& \textbf{26.7}
& 20.7
& 78.7
& 67.1
& 72.9\\

\midrule

\textbf{\ours}
& \textbf{13.3}
& \textbf{26.7}
& \textbf{53.3}
& 13.3
& 0.0
& \textbf{60.0}
& \textbf{40.0}
& \textbf{13.3}
& 13.3
& \textbf{25.9}
& 79.0
& \textbf{67.7}
& \textbf{73.3} \\

\bottomrule
\end{tabular}%
}
   \vspace{-15pt}
\end{table}




\paragraph{Investigation on KCCA-based Alignment.} 

\begin{wraptable}{r}{0.46\columnwidth}
    \centering
    \vspace{-5pt}
    \caption{Ablation on  DINOv2 alignment.}
    \label{tab:transfer_diagnostic}
    \vspace{-1pt}
    \small
    \setlength{\tabcolsep}{6pt}
    \renewcommand{\arraystretch}{1.0}
    \resizebox{\linewidth}{!}{%
        \begin{tabular}{lc}
            \toprule
            Configuration & MMVP \\
            \midrule
            KUEA  & \textbf{20.7} \\
            KUEA (w/o DINOv2-alignment) & \textbf{20.7} \\
            KUEA (downweighting regularization) & 20.0 \\
            \midrule
            \ours  & \textbf{25.9} \\
            \ours (w/o DINOv2-alignment) & 20.7 \\
            \ours (shuffled pairing to DINOv2) & 21.5 \\
            \ours (DINOv2 replaced by random teacher) & 20.0 \\
            \bottomrule
        \end{tabular}%
    }
    \vspace{-10pt}
\end{wraptable}

Table~\ref{tab:transfer_diagnostic} revisits how DINOv2 supervision operates in KUEA and \ours. Specifically, removing the kernel-alignment loss  from KUEA leaves the MMVP-VLM accuracy unchanged at $20.7$; increasing its relative importance ratio  by further downweighting the image--text regularizer further reduces the alignment loss, but even slightly decreases the accuracy. These observations indicate that the element-wise kernel-matrix matching alone is insufficient to enhance fine-grained perception, motivating our subspace-correlation formulation. Visualization on losses during training are provied in Appendix~\ref{app:full_transfer}. In contrast, removing the KCCA-alignment loss from \ours lowers the MMVP-VLM accuracy from $25.9$ to $20.7$, while shuffling the CLIP--DINOv2 correspondences and replacing DINOv2 with a random teacher yields $21.5$ and $20.0$, respectively. These ablation results attribute our visual representation enhancement to the projection correlations  learned under DINOv2 supervision on subspaces with KCCA-based optimization.

\begin{wraptable}{r}{0.35\linewidth}
    \centering
    \vspace{-12pt}
    \caption{Ablation on multi-view.}
    \label{tab:view_ablation}
    \small
    \setlength{\tabcolsep}{4pt}
    \renewcommand{\arraystretch}{1.0}
    \begin{tabular}{lc}
        \toprule
        Configuration & MMVP \\
        \midrule
         3-view (w/ CLIP text)    & \textbf{25.9} \\
         3-view (w/ CLIP image)  & 20.7 \\
        2-view         & 23.0 \\
        \bottomrule
    \end{tabular}
    \vspace{-12pt}
\end{wraptable}
\paragraph{Investigation on Multi-view Setups.} We then present ablation studies on  the role of our deployed 3rd view  that leverages the embedding projections from the frozen CLIP text encoder. Table~\ref{tab:view_ablation} demonstrates that the 2-view KCCA-alignment to DINOv2 in Section \ref{sec:subspace_cca} readily improves the MMVP accuracy from $20.7$  to $23.0$, while incorporating the 3rd view  of the pretrained CLIP text encoder further brings distinctive improvement to $25.9$. Note that when taking the pretrained CLIP image representations as the 3rd view, the MMVP accuracy remains  at $20.7$. This  is in sharp contrast to our deployment with the CLIP text encoder being the 3rd view, which brings distinctive improvements, revealing the potential of the  complementary  text semantics  as done in our \ours for further enhancing visual representations.




\section{Conclusion and Outlook}
\label{sec:conclusion}

We revisit the fine-grained visual representation enhancement of CLIP under vision-centric supervision from DINOv2. Beyond element-wise matching of induced kernel matrices, we align representations in feature subspaces by maximizing projection correlations through KCCA, and extend this alignment into a 3-view formulation by incorporating embeddings from the frozen CLIP text encoder, namely \ours. With the Lagrangian duality and the KKT conditions, we derive an efficient end-to-end training scheme based on a necessary condition satisfied by stationary solutions. Experiments demonstrate substantial gains on MMVP-VLM while preserving image-text semantics, with varied ablations validating our alignment scheme upon KCCA and  its multi-view extension. 

Currently, our study is  conducted under  the CLIP-DINOv2 setups. It would be interesting and worthwhile to explore other VLM backbones, vision-centric teachers, and broader perceptual benchmarks in future work, which we hope could inspire further investigations on the potential of optimizing subspace projections for representation alignment  in VLMs and beyond.

\bibliographystyle{plainnat}
\bibliography{references_updated}

\newpage

\appendix
\section{Proof of Lemma~1: Zero-Value Property of the Three-View Primal}
\label{app:kkt}

This appendix proves the zero-value property stated in
Lemma~1. Given feature representations
$\{\phi_v^{(n)}\}_{n=1}^{N},v\in\mathcal{V}$, let
$\{W_v^\star,\bm e_v^{(n)\star}\},v\in\mathcal{V}$ denote a stationary solution to the
 primal problem of 3-view KCCA in Eq.~(2), satisfying the KKT conditions.
We use $J^\star$ to denote the value of the primal objective $J$
evaluated at such a stationary solution.

Recall that the three views are indexed by
$\mathcal{V}=\{V,D,T\}$ as subscripts. For each sample $n$ and view
$v\in\mathcal{V}$, the projection scores are formulated as
\begin{equation}
    \bm e_v^{(n)}
    =
    W_v^\top \phi_v^{(n)},
    \quad
    n=1,\ldots,N,
    \quad v\in\mathcal{V},
    \label{eq:app_constraint}
\end{equation}
where the centering operation is absorbed into the feature
representation $\phi_v^{(n)}$. The three-view primal objective in Eq.~(2) can be equivalently
written as
\begin{align}
J
={}&
\frac{\rho}{N}
\sum_{n=1}^{N}
\Big(
{\bm e_V^{(n)}}^\top \bm e_D^{(n)}
+
{\bm e_V^{(n)}}^\top \bm e_T^{(n)}
+
{\bm e_D^{(n)}}^\top \bm e_T^{(n)}
\Big)
\nonumber\\
&-
\frac{1}{2N}
\sum_{n=1}^{N}
\sum_{v\in\mathcal{V}}
\gamma_v
\left\|\bm e_v^{(n)}\right\|_2^2
-
\frac{1}{2}
\sum_{v\in\mathcal{V}}
\operatorname{Tr}
\left(
W_v^\top W_v
\right).
\label{eq:app_primal}
\end{align}

\begin{proof}
With the Lagrangian duality, the Lagrange multipliers
$\bm\alpha_v^{(n)}$ associated with the equality constraints in
Eq.~\eqref{eq:app_constraint} are introduced, leading to the following  Lagrangian function:
\begin{align}
\mathcal{G}
={}&
J
+
\frac{1}{N}
\sum_{n=1}^{N}
\sum_{v\in\mathcal{V}}
{\bm\alpha_v^{(n)}}^\top
\left(
W_v^\top\phi_v^{(n)}
-
\bm e_v^{(n)}
\right).
\label{eq:app_lagrangian}
\end{align}
Then, the KKT conditions are given with sufficing the primal feasibility together with stationarity of $\mathcal{G}$ w.r.t. the primal variables. We first consider stationarity with respect to the projection
matrices. For every $v\in\mathcal{V}$, we have 
\begin{equation}
\nabla_{W_v}\mathcal{G}
=
-W_v
+
\frac{1}{N}
\sum_{n=1}^{N}
\phi_v^{(n)}
{\bm\alpha_v^{(n)}}^\top
=
0,
\end{equation}
which gives
\begin{equation}
W_v
=
\frac{1}{N}
\sum_{n=1}^{N}
\phi_v^{(n)}
{\bm\alpha_v^{(n)}}^\top.
\label{eq:app_w_stationarity}
\end{equation}
Taking the Frobenius inner product of both sides of
Eq.~\eqref{eq:app_w_stationarity} w.r.t. $W_v$ and using primal
feasibility from Eq.~\eqref{eq:app_constraint},  it yields
\begin{align}
\operatorname{Tr}
\left(
W_v^\top W_v
\right)
&=
\frac{1}{N}
\sum_{n=1}^{N}
\operatorname{Tr}
\left(
W_v^\top
\phi_v^{(n)}
{\bm\alpha_v^{(n)}}^\top
\right)
\nonumber\\
&=
\frac{1}{N}
\sum_{n=1}^{N}
{\bm\alpha_v^{(n)}}^\top
W_v^\top\phi_v^{(n)}
\nonumber\\
&=
\frac{1}{N}
\sum_{n=1}^{N}
{\bm e_v^{(n)}}^\top
\bm\alpha_v^{(n)}.
\label{eq:app_norm_identity}
\end{align}

Next, with the stationarity w.r.t.  the projection scores, for each  view $v\in\mathcal{V}$, we have 
\begin{equation}
\nabla_{\bm e_v^{(n)}}\mathcal{G}
=
\frac{1}{N}
\left(
\rho
\sum_{u\in\mathcal{V},u\neq v}
\bm e_u^{(n)}
-
\gamma_v \bm e_v^{(n)}
-
\bm\alpha_v^{(n)}
\right)
=
0,
\end{equation}
leading to
\begin{equation}
\bm\alpha_v^{(n)}
=
\rho
\sum_{u\in\mathcal{V},u\neq v}
\bm e_u^{(n)}
-
\gamma_v \bm e_v^{(n)},
\quad
v\in\mathcal{V}.
\label{eq:app_e_stationarity}
\end{equation}

For compactness, we define
\begin{align}
\mathcal{S}
&=
\sum_{n=1}^{N}
\Big(
{\bm e_V^{(n)}}^\top\bm e_D^{(n)}
+
{\bm e_V^{(n)}}^\top\bm e_T^{(n)}
+
{\bm e_D^{(n)}}^\top\bm e_T^{(n)}
\Big),
\label{eq:app_corr}\\
\mathcal{Q}
&=
\sum_{n=1}^{N}
\sum_{v\in\mathcal{V}}
\gamma_v
\left\|\bm e_v^{(n)}\right\|_2^2.
\label{eq:app_quad}
\end{align}
By multiplying Eq.~\eqref{eq:app_e_stationarity} by
${\bm e_v^{(n)}}^\top$ and summing over the three views,  we have
\begin{align}
\sum_{v\in\mathcal{V}}
{\bm e_v^{(n)}}^\top\bm\alpha_v^{(n)}
={}&
\rho
\sum_{v\in\mathcal{V}}
\sum_{\substack{u\in\mathcal{V}\\u\neq v}}
{\bm e_v^{(n)}}^\top\bm e_u^{(n)}
-
\sum_{v\in\mathcal{V}}
\gamma_v
\left\|\bm e_v^{(n)}\right\|_2^2
\nonumber\\
={}&
2\rho
\Big(
{\bm e_V^{(n)}}^\top\bm e_D^{(n)}
+
{\bm e_V^{(n)}}^\top\bm e_T^{(n)}
+
{\bm e_D^{(n)}}^\top\bm e_T^{(n)}
\Big)
-
\sum_{v\in\mathcal{V}}
\gamma_v
\left\|\bm e_v^{(n)}\right\|_2^2.
\label{eq:app_score_identity}
\end{align}
Then, by summing up Eq.~\eqref{eq:app_norm_identity} over
$v\in\mathcal{V}$ and  using
Eq.~\eqref{eq:app_score_identity}, it  gives
\begin{align}
\sum_{v\in\mathcal{V}}
\operatorname{Tr}
\left(
W_v^\top W_v
\right)
&=
\frac{1}{N}
\sum_{n=1}^{N}
\sum_{v\in\mathcal{V}}
{\bm e_v^{(n)}}^\top\bm\alpha_v^{(n)}
\nonumber\\
&=
\frac{1}{N}
\left(
2\rho\mathcal{S}
-
\mathcal{Q}
\right).
\label{eq:app_kkt_identity}
\end{align}

Finally, we  can substitute Eq.~\eqref{eq:app_kkt_identity} into
Eq.~\eqref{eq:app_primal}, yielding
\begin{align}
J^\star
&=
\frac{\rho}{N}\mathcal{S}
-
\frac{1}{2N}\mathcal{Q}
-
\frac{1}{2}
\left[
\frac{1}{N}
\left(
2\rho\mathcal{S}
-
\mathcal{Q}
\right)
\right]
\nonumber\\
&=
\frac{\rho}{N}\mathcal{S}
-
\frac{1}{2N}\mathcal{Q}
-
\frac{\rho}{N}\mathcal{S}
+
\frac{1}{2N}\mathcal{Q}
\nonumber\\
&=
0.
\end{align}
Therefore, with
$\{\phi_v^{(n)}:v\in\mathcal{V}\}_{n=1}^{N}$, every stationary
solution satisfying the KKT conditions of the three-view primal
problem gives a zero-value objective, i.e.,
\begin{equation}
    J^\star = 0,
\end{equation}
which completes the proof.
\end{proof}

\paragraph{Remarks on Lemma 1 and our loss function.}
Lemma~1 establishes $J^\star=0$ as a necessary condition satisfied
by stationary solutions of our \ours under the KKT
conditions. For the total training loss of Eq.~\eqref{eq:end_to_end_objective} in the main manuscript,
we then use $\sqrt{J^2}$
as the loss function associated with this zero-value property and
minimize it jointly with the 
regularization term, which avoids the eigenvalue problem typically solved in KCCA and makes the training remain under  the end-to-end  setups as in pretraining or post-training generic neural networks.

\section{More Detailed Numerical Results}
\label{app:full_transfer}

This section complements the main results with complete zero-shot
image-text retrieval results and additional evaluations on zero-shot classification performance. Together, these evaluations provide a broader assessment of whether post-training preserves CLIP's image-text semantics and
zero-shot capabilities.

\paragraph{Zero-shot image-text retrieval.}
Table~\ref{tab:full_retrieval} reports Recall@$1/5/10$ for
image-to-text (I2T) and text-to-image (T2I) retrieval on Flickr30K and
MSCOCO. Our \ours achieves the highest overall retrieval average of $73.35$,
compared with $72.66$ for the original CLIP model. The improvements are
most pronounced in text-to-image retrieval, particularly on MSCOCO,
while the image-to-text performance remains competitive across both
datasets.

\begin{table}[t]
    \caption{
    Complete zero-shot image--text retrieval results on Flickr30K and
MSCOCO. All methods use OpenAI CLIP ViT-L/14 and DINOv2.
    }
    \label{tab:full_retrieval}
    \centering
    \small
    \setlength{\tabcolsep}{3.0pt}
    \renewcommand{\arraystretch}{1.08}

    \begin{tabular}{lccccccccccccc}
        \toprule
        \multirow{3}{*}{Method}
        & \multicolumn{6}{c}{Image-to-Text Retrieval}
        & \multicolumn{6}{c}{Text-to-Image Retrieval}
        & \multirow{3}{*}{\shortstack{Average\\Retrieval}} \\
        
        \cmidrule(lr){2-7}\cmidrule(lr){8-13}

        & \multicolumn{3}{c}{Flickr30K}
        & \multicolumn{3}{c}{MSCOCO}
        & \multicolumn{3}{c}{Flickr30K}
        & \multicolumn{3}{c}{MSCOCO}
        & \\

        \cmidrule(lr){2-4}
        \cmidrule(lr){5-7}
        \cmidrule(lr){8-10}
        \cmidrule(lr){11-13}

        & R@1 & R@5 & R@10
        & R@1 & R@5 & R@10
        & R@1 & R@5 & R@10
        & R@1 & R@5 & R@10
        & \\
        \midrule

        CLIP~\cite{radford2021clip}
        & 77.7 & 94.6 & 98.1
        & \textbf{50.0} & 73.7 & \textbf{82.9}
        & 59.8 & 83.5 & 89.3
        & 34.3 & 58.9 & 69.1
        & 72.66 \\

        DIVA~\cite{wang2025diva}
        & 78.7 & 94.8 & \textbf{98.3}
        & 49.9 & \textbf{74.1} & 82.8
        & 59.8 & 83.4 & 89.4
        & 34.2 & 58.8 & 69.0
        & 72.77 \\

        KUEA~\cite{gong2025kuea}
        & 77.6 & 93.8 & 98.0
        & 48.3 & 73.3 & 81.4
        & 61.8 & \textbf{85.4} & \textbf{90.8}
        & 35.2 & 59.7 & 69.9
        & 72.93 \\

        \textbf{Ours}
        & \textbf{79.1} & \textbf{95.2} & 97.6
        & 48.1 & 72.1 & 81.8
        & \textbf{62.5} & 84.9 & 90.5
        & \textbf{36.0} & \textbf{61.0} & \textbf{71.4}
        & \textbf{73.35} \\

        \bottomrule
    \end{tabular}
\end{table}

\paragraph{Zero-shot classification.}
To assess whether \ours preserves CLIP's zero-shot recognition
capability, Table~\ref{tab:full_classification} compares the original
CLIP model and \ours across nine classification datasets. Despite
variations across individual datasets, \ours achieves an average
accuracy of $68.06$, closely matching the $68.08$ of the original CLIP
model. The nearly unchanged average indicates that \ours retains CLIP's overall zero-shot classification performance.

\begin{table}[t]
    \caption{ Zero-shot classification accuracy of the pretrained CLIP model and our
post-trained model.}
    \label{tab:full_classification}
    \centering
    \small
    \setlength{\tabcolsep}{5.0pt}
    \renewcommand{\arraystretch}{1.08}

    \begin{tabular}{l|ccccccccc|c}
        \toprule
        \multirow{2}{*}{Model}
        & \multicolumn{9}{c|}{Zero-shot Datasets}
        & \multirow{2}{*}{\shortstack{Average\\Zero-shot}} \\
        \cmidrule(lr){2-10}

        & \rotatebox{70}{CIFAR10}
        & \rotatebox{70}{CIFAR100}
        & \rotatebox{70}{Caltech101}
        & \rotatebox{70}{FER2013}
        & \rotatebox{70}{OxfordPets}
        & \rotatebox{70}{DTD}
        & \rotatebox{70}{RESISC45}
        & \rotatebox{70}{EuroSAT}
        & \rotatebox{70}{ImageNet-O}
        & \\
        \midrule

        CLIP~\cite{radford2021clip}
        & \textbf{95.20}
        & \textbf{75.49}
        & 83.83
        & 50.00
        & \textbf{93.21}
        & \textbf{55.21}
        & \textbf{64.90}
        & \textbf{62.65}
        & 32.25
        & \textbf{68.08} \\

        \textbf{Ours}
        & 93.28
        & 72.94
        & \textbf{84.17}
        & \textbf{54.50}
        & 89.21
        & 53.94
        & 60.95
        & 57.48
        & \textbf{46.05}
        & 68.06 \\

        \bottomrule
    \end{tabular}
\end{table}

\paragraph{Visualization on Losses during Training}
To complement the ablation results in Table~\ref{tab:transfer_diagnostic}, we visualize the training losses of the alignment and regularization objectives for KUEA and \ours in Fig.~\ref{fig:loss_dynamics}. For KUEA, the kernel-alignment loss oscillates within a relatively stable range throughout training and does not exhibit a clear downward trend. This optimization behavior is consistent with our ablation result that removing the KUEA alignment term leaves the MMVP performance unchanged. Together, these observations suggest that the kernel-alignment objective provides limited effective contribution under the considered training setup, motivating us to revisit the alignment formulation.
For comparison, we also report the training losses of \ours under the KCCA-based alignment objective introduced in Sec.~\ref{sec:method}. As shown in Fig.~\ref{fig:loss_dynamics}, the alignment loss decreases rapidly during the early stage of training and then gradually converges to a low and relatively stable range.

\begin{figure}[t]
    \centering

    \begin{subfigure}[t]{0.485\textwidth}
        \centering
        \includegraphics[width=\linewidth]{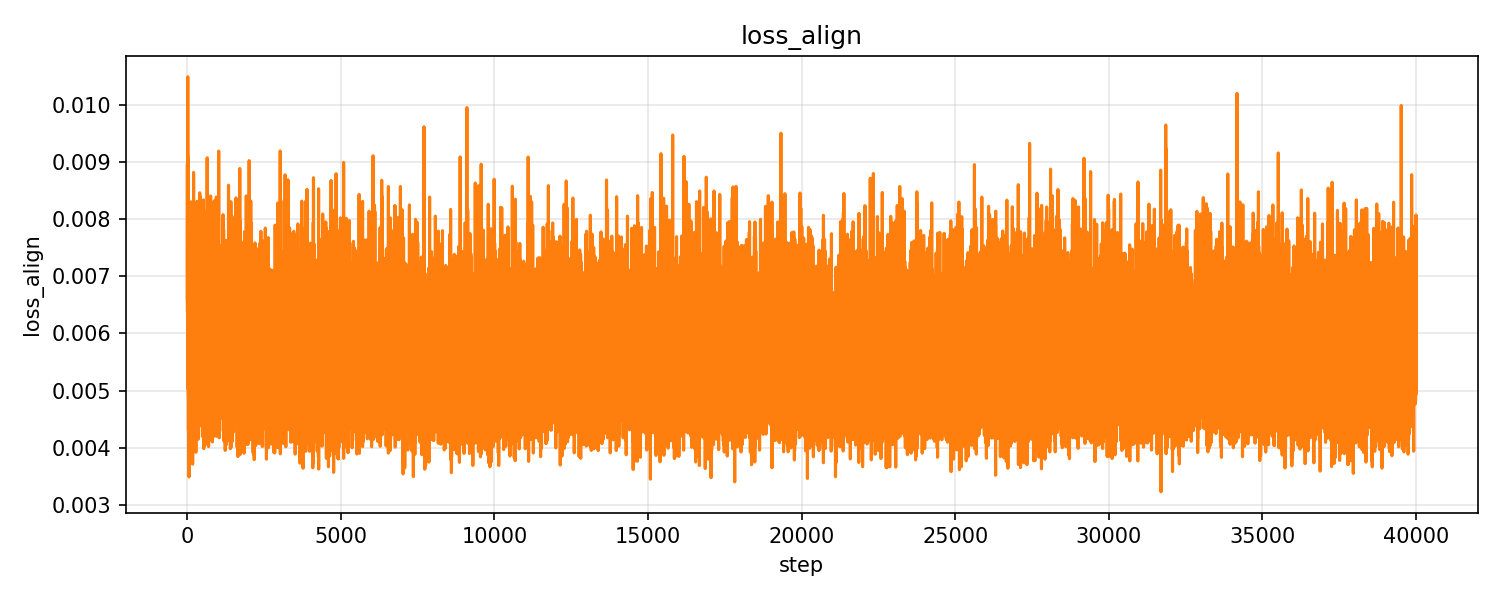}
        \caption{KUEA: alignment loss.}
        \label{fig:kuea_align_loss}
    \end{subfigure}
    \hfill
    \begin{subfigure}[t]{0.485\textwidth}
        \centering
        \includegraphics[width=\linewidth]{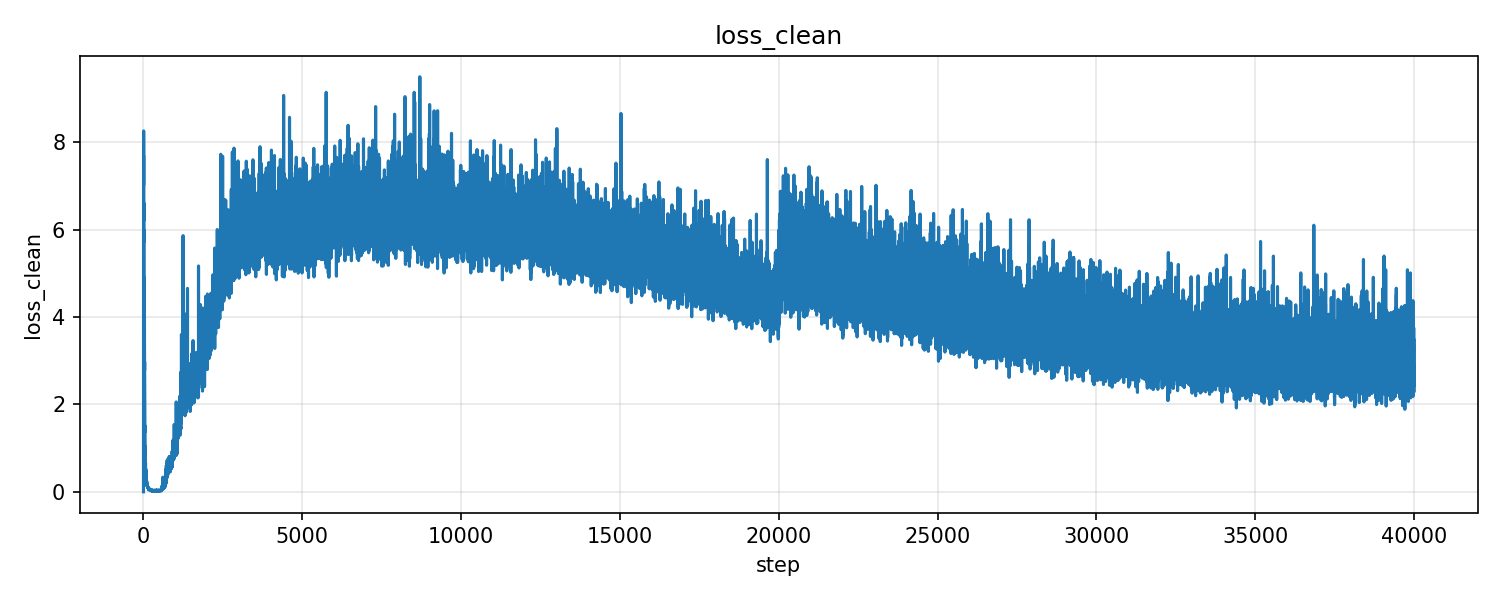}
        \caption{KUEA: regularization loss.}
        \label{fig:kuea_reg_loss}
    \end{subfigure}

    \vspace{0.6em}

    \begin{subfigure}[t]{0.485\textwidth}
        \centering
        \includegraphics[width=\linewidth]{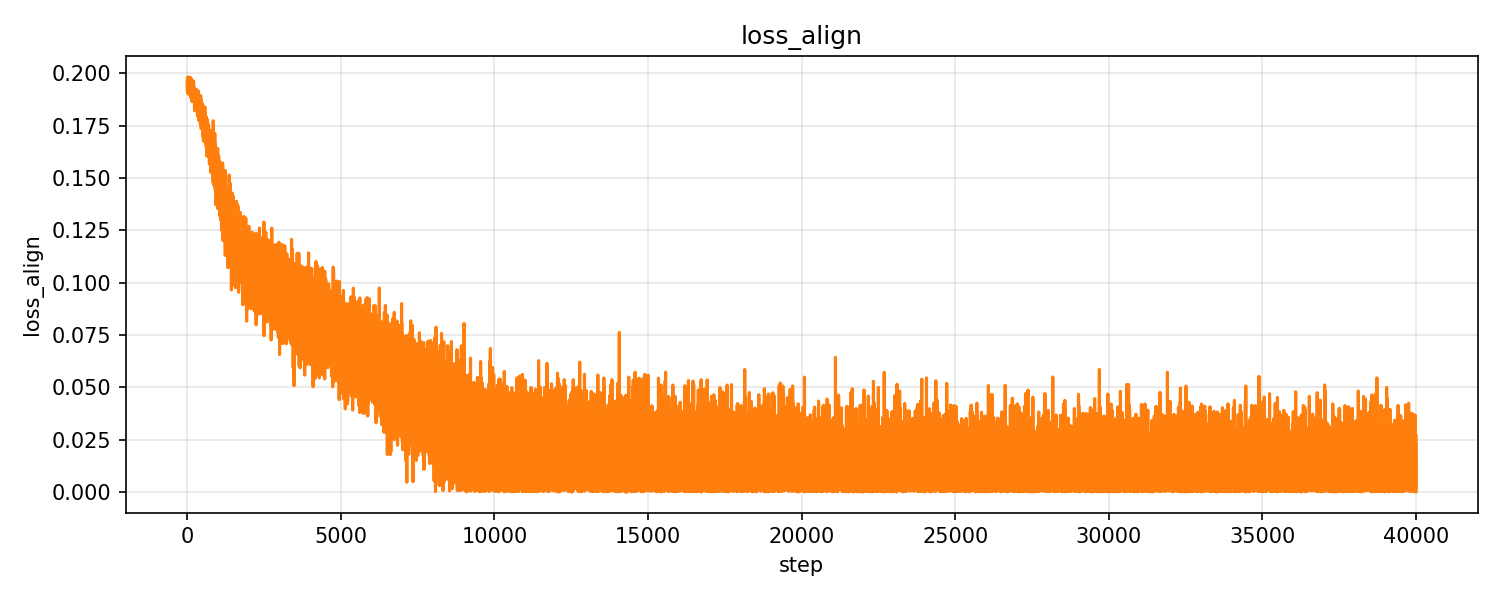}
        \caption{3vKCCA: alignment loss.}
        \label{fig:3vkcca_align_loss}
    \end{subfigure}
    \hfill
    \begin{subfigure}[t]{0.485\textwidth}
        \centering
        \includegraphics[width=\linewidth]{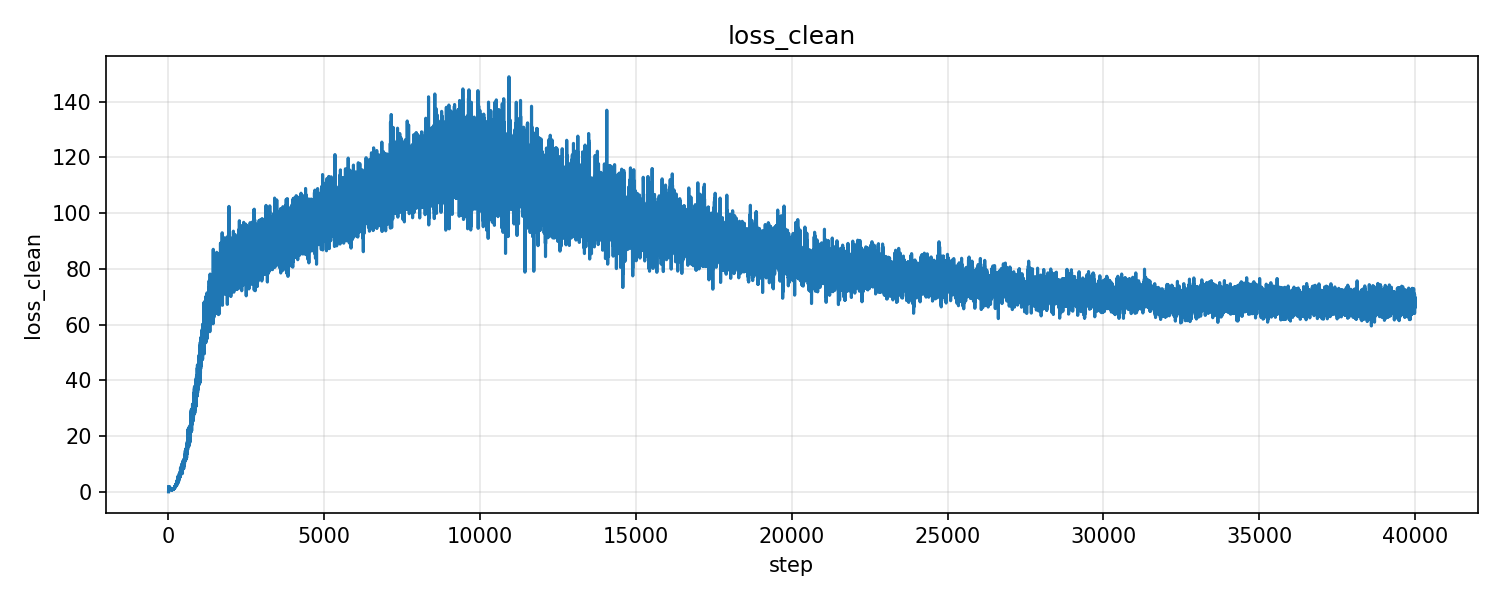}
        \caption{3vKCCA: regularization loss.}
        \label{fig:3vkcca_reg_loss}
    \end{subfigure}

    \caption{
    \textbf{Training dynamics of KUEA and \ours.}
    The left and right columns show the alignment and regularization losses, respectively, for KUEA and \ours. In our implementation, the alignment and regularization losses are recorded as \texttt{loss\_align} and \texttt{loss\_clean}, respectively.
    }
    \label{fig:loss_dynamics}
\end{figure}

\section{Implementation Details}
\label{app:implementation_details}

\paragraph{Feature Processing.}
For each view $v \in \mathcal{V}=\{V,D,T\}$, we construct the feature representation $\phi_v^{(n)}$ by applying sample-wise $\ell_2$ normalization followed by minibatch centering to the encoder embedding $\bm h_v^{(n)}$. Specifically, we conduct
\begin{equation}
\phi_v^{(n)}
=
\frac{\bm h_v^{(n)}}{\|\bm h_v^{(n)}\|_2}
-
\frac{1}{B}
\sum_{m \in \mathcal{B}}
\frac{\bm h_v^{(m)}}{\|\bm h_v^{(m)}\|_2},
\end{equation}
where $\mathcal{B}$ denotes the current minibatch of size $B$.
The centering operation is performed separately for each view. The resulting features are therefore normalized at the sample level and centered within the current minibatch before being projected by $W_v$.

\paragraph{Practical Implementations in Optimization.}
During end-to-end training, we apply a post-update normalization to the canonical projection matrices associated with the two visual views. Specifically, for
\begin{equation}
W_v=[\bm w_{v,1},\ldots,\bm w_{v,r}],
\qquad v\in\{V,D\},
\end{equation}
after each optimizer update, we perform
\begin{equation}
\bm w_{v,k}
\leftarrow
\frac{\bm w_{v,k}}{\left\|\bm w_{v,k}\right\|_2},
\qquad
v\in\{V,D\},
\quad
k=1,\ldots,r,
\end{equation}
where $r$ is the projection dimension. This column-wise normalization controls the scale of the canonical directions in the alignment to DINOv2 during training.
This also helps keep the scale of the two visual projection directions stable during training. We do not apply this  normalization to $W_T$, leaving the text-view projection unconstrained to retain scale flexibility in cross-modal correlation optimization.

For numerical stability, we implement the alignment objective as
$
\mathcal{L}_{\mathrm{align}}
=
\sqrt{J^2+\epsilon},$
where $J$ is the three-view primal objective defined in the main text with $\epsilon=10^{-8}$. The small constant $\epsilon$ stabilizes both the loss value and its gradient when $J$ approaches zero, while preserving $J=0$ as the zero-residual target. The regularization term follows the definition in the main text.
The complete training objective is then cast as:
\begin{equation}
\mathcal{L}
=
\eta\sqrt{J^2 + \epsilon}
+
\mathcal{L}_{\mathrm{reg}},
\end{equation}
with
$
\mathcal{L}_{\mathrm{reg}}
=
\frac{1}{N}
\sum_{n=1}^N
\left\|
\hat{f}_{\mathrm{img}}\!\left(I^{(n)}\right)
-
f_{\mathrm{img}}\!\left(I^{(n)}\right)
\right\|_2^2.$

\paragraph{Training Configuration}
We set the pairwise correlation weight to $\rho=5.0$ and use
$\gamma_V=\gamma_D=\gamma_T=1.0$ for the three views. The number of canonical components is set to $r=8$. For the practical training objective,  we weight the alignment term by
$\eta = 78.125$.
We train the model using AdamW with a learning rate of $10^{-5}$ and a weight decay of $10^{-4}$. Post-training is performed for $40{,}000$ optimization steps with a batch size of $64$, including $2{,}800$ warmup steps. All experiments use 2 × RTX 5090 GPUs.

\end{document}